%% file: memauto.tex
\documentclass[twocolumn]{autart}

\usepackage{amsmath,amssymb}

\newcommand{\F}{\mathcal{F}}

\newcommand{\PA}{\mathcal{P}}
\newcommand{\ep}{\epsilon}

\newcommand{\LO}{\mathcal{L}}
\newcommand{\B}{\mathcal{B}}

\newcommand{\R}{\mathbb{R}}

\numberwithin{equation}{section}
\usepackage{color}
\usepackage{latexsym}

  \usepackage{natbib}
\begin{document}

\begin{frontmatter}
\runtitle{Memorization in generative models}

\title{A dynamical systems view of training generative models and the memorization phenomenon}

\thanks[footnoteinfo]{Corresponding author: V.\ S.\ Borkar.}

\author[ICTS]{Siva Athreya}\ead{athreya@icts.res.in},
\author[IISc]{Chiranjib Bhattacharya}\ead{chiru@iisc.ac.in},
\author[IITB]{Vivek S.\ Borkar}\ead{borkar.vs@gmail.com}

\address[ICTS]{International Institute for Theoretical Sciences,  Survey No.\ 151, Hesarghatta Road,   Shivakote Hobli, Bengaluru 560089, India}
\address[IISc]{Department of Computer Science and Automation, Indian Institute of Science, Bengaluru 560012, India}   
\address[IITB]{Department of Electrical Engineering, Indian Institute of Technology Bombay, Powai, Mumbai 400076, India}

\begin{keyword} 
diffusion models; learning; memorization; two time scales; collapse
\end{keyword}

\begin{abstract}
Using recent works of one of the authors (VSB) on collapse in generative models and two time scale dynamics in stochastic gradient descent in high dimensions, we give a system theoretic explanation of the memorization phenomenon in generative models. This relies purely on the dynamic aspects of the training phase. Specifically, we use a result of \cite{Austin} to motivate a stylized model for the loss function for stochastic gradient descent (SGD) wherein the loss function has a strong dependence on some variables and weak dependence on the rest in a precise sense. This naturally leads to two distinct time scales in the constant step size SGD that is commonly used in machine learning. This fact has been used to explain the double descent phenomenon in SGD in \cite{Borkar2}. In conjunction with a mathematical model for collapse phenomenon in SGD developed in \cite{Borkar1},  we analyze the constant step size SGD using the recent results of \cite{Azizian} in order to explain the phenomenon of \textit{memorization} wherein a generative model that is concurrently being tuned yields the same or similar outputs for significant stretches of time. This gives a novel perspective on the aforementioned phenomena reported in machine learning literature and their interrelationships, using a dynamical systems viewpoint.
\end{abstract}

\end{frontmatter}

\input{sec1intro}

\input{sec2techprem}

\input{sec3mainresults}
\section{Conclusions and future directions}\label{sec:discussion}

We have presented a dynamic view of the phenomenon of memorization
reported in the training of diffusion models, drawing upon some analysis of the related phenomenon
of `collapse' in machine learning. This also leads to some qualitative feel for the various factors affecting its occurrence and intensity, and throws up some interesting directions for further research. The work is grounded in some recent developments in stochastic approximation with constant step size due to \cite{Azizian} and flags its importance in the analysis of machine learning algorithms.

Going forward, one important problem is to estimate the error in the global minima of the Freidlin-Wentzell potential $V$ for the constant stepsize stochastic approximation, and those of  the function $\Gamma'(\cdot):= \Gamma(\lambda(\cdot),\cdot)$ being minimized. $\Gamma'$ would be the Freidlin-Wentzell potential for the Langevin dynamics and \textit{ipso facto}, an approximation for the Freidlin-Wentzell potential for its discretization, viz., the Langevin algorithm. $\Gamma'$ has an explicit control theoretic expression given by
$$\Gamma'(x) = \inf_{\mathcal{A}_x}\int_0^\infty \|u(t)\|^2dt,$$
where $\mathcal{A}_x$ is the set of all trajectories of the control system
$$\dot{x}(t) = \nabla F(x(t)) + u(t), t \geq 0,$$
that satisfy $x(0) = x$ and $x(t) \to$ one of the global minima of $F$, see \cite{FW}, Section 6.4,  \cite{Sheu}, \cite{BB1} for results in this vein for increasing level of generality.  The results of \cite{Azizian} indicates that an analogous expression should hold  for $V(\cdot)$. The respective potentials, being value functions of suitable control problems, should be unique viscosity solutions to the appropriate Hamilton-Jacobi equations in a suitable class of functions, e.g., continuous functions with appropriate bounds on their growth rates at infinity. This should facilitate a comparison between the two. This remains a research problem for the future.

\begin{ack}
SA was supported by  the Department of Atomic Energy, Government of India, under project no. RTI4001, CB was supported by research grant from Shell Markets India and  VSB was supported by a National Science Chair from the Government of India. A part of the work was done while VSB was visiting International Centre for Theoretical Sciences, Bengaluru. VSB thanks Dr.\ Anik Kumar Paul for helpful discussions regarding the material in the Appendix, which helped clarify some subtle issues.
\end{ack}

% Manual newpage inserted to improve layout of sample file - not
% needed in general before appendices/bibliography.
%\newpage
%\appendix
\vskip 0.2in

\bibliography{membib}
\newpage

\input{appendix}

\end{document}

%% file: sec1intro.tex
\section{Introduction}
An important issue with generative models such as diffusion models is that of `memorization' in their training process. This refers to the empirically observed phenomenon whereby it eventually starts regenerating the same or similar samples as before for a stretch of time. There is already a considerable body of work about this topic. For a small sampler, see \cite{Baptista,Buch,Chen,Gu,Hint,Kim,Pham,Wang,Wen,Ye}.
The objective of this article is to present a different take on this problem from a dynamical systems  viewpoint. This essentially means that we shift the focus  to the training algorithm, treating it as a dynamical system. The algorithm in question  broadly falls into the class of `stochastic approximation algorithms with Markov noise' \cite{BMP} (see also \cite{Borkarbook}, Chapter 8, for updates and a concise overview).  Using this fact and the known theoretical aspects of this class of algorithms, we look at  the phenomenon based on the temporal evolution of the training scheme. This leads to significant additional insights that are not apparent in a static view, which is usually based upon statistical mechanics models applied to the loss function landscape. In particular, we are not interested in the `thermodynamic limit' of statistical mechanical quantities as the domain of the loss landscape grows to infinity, in order to characterize critical phenomena. We are more interested in the asymptotic behaviour of the learning algorithm that explores this landscape. This explicitly incorporates the temporal aspects of learning in the analysis and facilitates additional insights into the observed phenomena.

This is not to say that this work  is the first work to take a dynamic viewpoint, but it does go deeper into the dynamics to give a clean picture of how the phenomenon of memorization emerges,  building upon a related but in some sense simpler phenomenon, viz., that of `collapse'. We argue that memorization emerges as a collapse-like phenomenon wherein a diffusion model, being a generative model, starts collapsing into a fixed output as in the pure collapse phenomenon, except that it is concurrently being modulated by the score learning scheme. This in conjunction with some other factors render this phenomenon intermittent. This also leads to some additional  intuition about what specifics of a given problem can encourage or discourage memorization. This ties together in a unified manner a variety of phenomena reported in machine learning.\\
%\tcr{Is the following the correct reading of the paper? One of the focus of the paper is to take a dynamic systems viewpoint to (1) recover existing results and (2) derive additional insights. In such a case an additional paragraph may help on what results can be obtained from a static viewpoint. Also a separate paragraph on problem (alluded to in the first contribution) maybe useful.}

A broad outline of our results is as folllows:\\

\begin{enumerate}
\item Based on an important result of \cite{Austin} recalled in section \ref{AustinThm}, we first observe that the problem structure, specifically the high dimensionality of the parameter space, naturally leads to multiple time scales in the learning algorithm. (See equations \eqref{fast3}-\eqref{slow3} below.) Two time scale dynamics is well studied in systems and control. This motivates a stylized multiple time scale model similar to the one used in \cite{Borkar2} in order to explain the temporal double descent phenomenon.  Specifically, the stochastic approximation scheme is now a constant step size two time scale stochastic approximation with Markov noise. The two temporal time scales in the training algorithm is an emergent phenomenon here due to the different spatial scales as observed in \cite{Borkar3}, and are not put there `by hand' as in the classical two time scale stochastic approximation schemes introduced in \cite{Borkar0}, \cite{Borkar3}.

\item We observe that the Markov chain in question comes from a generative model, which, in the classical set up, is known to exhibit the phenomenon of \textit{collapse}. Using the recent results of \cite{Borkar1}, which takes a somewhat different view of collapse, we explain the role of collapse in the present context in section \ref{Coll}. This  analysis is based on a dynamic interpretation of the onset of collapse in terms of an absorbing Markov chain, taken from \cite{Borkar1}. Specifically, a generative model leads to a probability measure-valued Markov chain that is absorbing, which is tantamount to saying that it asymptotically collapses to a Dirac measure concentrated at a random probability measure. This will be  our formal mathematical definition of collapse. See Theorem \ref{collap} below for a precise formulation.\\

\item We briefly recall a rigorous theory for constant step size stochastic approximation due to \cite{Azizian} in section \ref{role}. This is in fact grounded in the theory of stochastic dynamical systems with small noise as developed in \cite{FW}, but with a key difference that is highlighted later. This leads to a complete picture of the dynamic behaviour of the generative model tuning scheme, which includes some surprises as well. We push this theory a little further in order to give some additional insights about the asymptotic behaviour of constant step size SGD using some classical results of \cite{Hwang}. In particular, this sheds light on the observed preference of flatter minima by the SGD.

\item We then link this with the phenomenon of memorization in the context of generative  models. This is a more complex phenomenon than collapse due to additional dynamical effects that prevent a pure collapse. What this leads to is an intermittent collapse-like situation punctuated by regular evolution. The memorization phenomenon reported in the literature stands for a  generative model that is being trained, going through phases of repeating itself. We interpret memorization as repeated collapse with model drift on a slower time scale which renders such repetition intermittent. See Theorem \ref{memfin} below.

\item We then explain the role of noise in mitigating or eliminating the onset of memorization in this framework. This phenomenon has already been observed in the literature, most notably in \cite{Wu}, which we build upon. We explain this in the light of the theory for constant step size stochastic approximation \cite{Azizian}.  This leads to a complete picture of the dynamic behaviour of the generative model training scheme, which includes some surprises as well. 

\end{enumerate}

The article is organized as follows. Section 2 summarizes the key background material required in order to understand the subsequent developments, split into three sections. Section 2.1 recalls the standard formalism of diffusion models. Section 2.2 recalls a theorem of  \cite{Austin} that we use in order to motivate our formalism. Section 2.3, a critical one, recalls the two time scale stochastic approximation formalism in the context of stochastic gradient  descent (SGD for short). It is critical because the variant used here goes well beyond the standard framework thereof from \cite{Borkar0}, \cite{Borkar3} and subsequent works on the topic, in that the noise structure is much more complex. In fact there are some subtleties that have been overlooked in this respect in prior usage of this paradigm, which we underscore. They cannot be wished away and we opt to make a commonly used simplification in order to proceed. The issue is taken up again in the Appendix where some remedies from existing literature are highlighted.
Section 3 details a mathematical model  of the collapse phenomenon from \cite{Borkar3} which our model of memorization builds upon.

Section 4 is the core material of this article. Section 4.1 recalls the `vanilla' case of convergent two time scale SGD with decreasing step sizes, mimicking the treatment of \cite{Borkar3}. This is purely as a backdrop for what follows, in order to contrast with decreasing step size the situation arising in constant step size SGD that is our focus, and in order to highlight the fact that the phenomena which we try to explain are some emergent effects of \textit{constant step size SGD}, not present in the classical decreasing step size paradigm. As constant step size has its advantage and is the preferred choice in practice in the machine learning community, it is important that we clarify these subtleties. With this backdrop, section 4.2 analyzes the effects of constant step size in detail, clarifying and quantifying its asymptotic behaviour, building upon the important recent contributions of \cite{Azizian}. Section 4.3 then positions the memorization phenomenon in this context and explains its occurrence and non-occurrence in different empirical scenarios. As mentioned above, we revisit the subtleties about SGD with Markov noise of the kind we have and the difficulties in using the standard paradigms. We also propose in an Appendix as a remedy two schemes from literature that use a single function evaluation and analyze  one of them in detail to illustrate the issue.

%% file: sec2techprem.tex
Two main takeaways from this exercise that go beyond the specific technical contributions would be, first, highlighting the role of constant step size in the observed phenomena in SGD, and two, the subtleties of Markov noise and its algorithmic implications.

\section{Technical preliminaries}

This section is devoted to recalling the technical background for our main results. This is split into two distinct sub-sections. We begin with the simplest classical formulation of diffusion models as our underlying template for a tunable generative model in sub-section \ref{DM}.  The next sub-section recalls an important result of \cite{Austin} that motivates our basic formulation, which we describe  in sub-section \ref{style}. We follow this in sub-section \ref{MNoise} with a short intuitive tutorial on two time scale stochastic approximation with the so called Markov noise. We only state the variant relevant for our purposes, viz., stochastic gradient descent, in order to not cloud the main results with excessive technicalities.\\

\noindent
{\bf Notation:} Recall that a Polish space $S$ is a  Hausdorff topological space that is separable (i.e., has a countable dense set) and admits a compatible complete metric (i.e., it has a metric $d: S \times S \to \R$ satisfying: 1.\ $x_n \to x_\infty$ in $S$ if and only if $d(x_n,x_\infty) \to 0$, and, 2.\ if $d(x_n,x_m) \to 0$ as $m,n\to \infty$, then there exists an $x_\infty \in S$ such that $x_n \to x_\infty$. The uniqueness of this limit is easy to prove.) Let $C_b(S)$ denote the space of bounded continuous functions $S \to \R$. Likewise, $C([0,T];\R^d)$ will denote the Banach space of continuous functions $[0,T]\to\R^d$ with the maximum norm. We shall denote by $\PA(S)$ the space of probability measures on $S$ with the coarsest topology that renders continuous the maps $\mu \in \PA(S) \mapsto \mu(f) := \int fd\mu \in \R$ for all $f \in C_b(S)$.  This space is Polish with one choice of a complete metric being the Prohorov metric (\cite{Bill}). Polish spaces turns out to be the correct level of generality for most applications of probability theory because most of the standard theory of convergence of probability measures (among other things) extends to Polish spaces and not beyond. Also, Polish spaces include  most function spaces of importance in stochastic processes, and even more importantly, because Polish spaces of probability measures on Polish spaces are Polish, a blessing probabilists appreciate only too well. For a Polish space $S$, the law of an $S$-valued random variable $X$ (say) will be denoted by $\LO(X) \in \PA(S)$. 

We use the abbreviation `ODE' for `Ordinary Differential equations'. Also, by `a.s.', an abbreviation for `almost surely', we mean `with probability $1$', following the standard usage in probability theory.\\

\subsection{Diffusion models}\label{DM}

We describe here the classical diffusion model \cite{Song} (see \cite{Yang} for a comprehensive survey of early work on this topic). We confine this brief introduction to the simplest model, which is the stable linear diffusion process, also called \textit{Ornstein-Uhlenbeck process}. This is  given by a linear stochastic differential equation in $\R^d$ :
\begin{equation}
dX(t) = - \upsilon X(t)dt + dW(t), \ 0 \leq t \leq T. \label{OU}
\end{equation}

Suppose we consider the time-reversed process $Y(t) = X(T-t), t \in [0,T]$, and simulate it for $t \in [0,T]$. Then one would get at time $0$ a noisy version of $X(0)$. This can be taken to be a sample from the distribution from which $X(0)$ was drawn. This is the standard interpretation of this procedure.

The difficulty is in reversing the diffusion. It can be shown \cite{Anderson,Follmer,Haussmann} that $Y(t), t \in [0,T]$, satisfies the stochastic differential equation
\begin{eqnarray}
dY(t) &=& (Y(t) + \nabla \log p(Y(t)|X(0),t))dt + W(t), \nonumber \\
&& \ \ \ \ \ \ \ \ \ \ \ \ \ \ \ \ \ \ \ \ \ \ \ \ \ \ \ \ \ \ \ \ \ \  t \in [0,T],  \label{OUR}
\end{eqnarray}
where $p(\cdot|X(0),t)$ is the conditional probability density of $X(t)$ given $X(0)$ and $\nabla $ denotes the gradient operator. The problem then is to learn this `score function' $(x,t) \mapsto \nabla \log p(\cdot|x,t)$. This is to be learnt by minimizing the empirical mean square error
\begin{equation}
F(\theta) := \widetilde{E}\left[\|s_\theta(X(t), t) - \zeta(t)\|^2\right], \label{scoreError}
\end{equation}
where $s_\theta(\cdot, \cdot)$ is a parametrized approximation of the score function, usually a deep neural network (DNN) parametrized by a vector parameter $\theta \in \R^d$, and $\zeta(t)$ is a sample based estimate  of the score function $\nabla\log p(X(t)|X(0),0)$. $\widetilde{E}[\cdots]$ denotes the empirical mean of $(X(0), X(t))$ over samples, i.e.\ the sample average.\\

The phenomenon we are interested in is the empirically observed fact that the diffusion model, modulated by the score function that is being concurrently learned in a suitably parametrized form,  tends to output the same image or very similar images for significant stretches of time. This is called \textit{memorization}.

While flagged in particular in the context of diffusion models, this formalism underlies all generative models that are being tuned by a training algorithm. So we abstract out the problem placing it in the context of an abstract and idealized generative model borrowed from \cite{Borkar1}. Then in order to keep matters (in particular, the notation) simple, we also abstract out the underlying stochastic approximation algorithm with Markov noise and analyze it as such in subsection \ref{MNoise}. We  return to the implications for generative models  in section 4.

\subsection{Austin's theorem}\label{AustinThm}
 
A recent work \cite{Borkar2} proposed a dynamic view of `double descent' in SGD based on a stylized model which we borrow here. This model uses the fact that given a bound on the Lipschitz constant of a Lipschitz function of a very large number of parameters, the function will depend mostly on a subset of the parameter set. Something of a folklore in the machine learning community, this is made precise in the recent work of \cite{Austin}. We state Austin's theorem below.

Let $(X, d)$ be a compact metric space and $\mu$ a probability measure on its Borel $\sigma$-algebra whose support is connected and locally connected. For  $N \geq 2$,
define the metric $d^N(\cdot,\cdot)$ on $X^N := X\times \cdots \times X$ ($N$ times) by:
\begin{eqnarray*}
\lefteqn{d^N([x_1,\cdots,x_N],[y_1,\cdots,y_N]) := max_{1 \leq i \leq N}d(x_i,y_i)} \\
&& \ \ \ \ \ \ \ \ \ \ \ \ \ \ \ \ \ \ \ \ \ \ \ \ \ \ \ \ \ \ \ \ \ \ \ \ \ \ \ \ \ \ \ \ \ \ \ \ \ \ \ \ \ \ \ \ \ \  < \infty.
\end{eqnarray*}
Let $\omega : [0,\infty) \to [0,\infty]$ be a continuous non-decreasing `\textit{modulus of continuity}'. 
Define
$\Gamma_S(\cdot) :=$ projection $L_2(\mu) \to$ its subspace corresponding to coordinates  $i \in S \subset \{1,\cdots,N\}$.

\begin{thm}\label{Austin} (Austin's theorem) For every $\ep > 0$,
there exists an integer $p \geq 1$ depending on $X, \ep$ and $\omega$, with the following property:    For every $N > 1$, if $f : X^N \to \R$ has modulus of continuity at most $\omega$
for the metric $d^N$, then there exists an $S \subset [1,\cdots,N]$ with $|S| \leq p$ such that
$$\|f - \Gamma_S(f)\|_{L_1(\mu^N)} < \ep.$$
\end{thm}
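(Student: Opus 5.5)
The plan is an \emph{energy-increment} argument run on conditional expectations, with the modulus $\omega$ and the connectedness of $\operatorname{supp}\mu$ used to rule out functions that spread their dependence thinly over many coordinates. Throughout I read $\Gamma_S(f)$ as the conditional expectation $E_{\mu^N}[f\mid x_S]$, i.e.\ integration over the coordinates outside $S$. Since $X$ is compact and $f$ has modulus $\omega$ for $d^N$, the oscillation of $f$ is at most $M:=\omega(\mathrm{diam}\,X)$. Replacing $f$ by $f-E f$, we may assume $\|f\|_{L_2(\mu^N)}\le M$.

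\textbf{Step 1 (increment and termination).} For $S\subset T$, the martingale (Pythagorean) identity gives
\[
\|\Gamma_T f\|_2^2=\|\Gamma_S f\|_2^2+\|\Gamma_T f-\Gamma_S f\|_2^2 ,
\]
and the energies $\|\Gamma_S f\|_2^2$ are all bounded by $M^2$. It therefore suffices to prove the following \emph{increment lemma}. There exist $\delta=\delta(\ep,\omega,X,\mu)>0$ and $q=q(\ep,\omega,X,\mu)$ such that whenever $\|f-\Gamma_S f\|_{L_1}\ge\ep$, some $T\supset S$ with $|T\setminus S|\le q$ satisfies $\|\Gamma_T f\|_2^2\ge\|\Gamma_S f\|_2^2+\delta$. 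Starting from $S=\emptyset$ and iterating, the process must stop after at most $M^2/\delta$ rounds. This gives $p=qM^2/\delta$, which is independent of $N$. Uniformity in $N$ holds because every constant depends only on $(\ep,\omega,X,\mu)$.

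\textbf{Step 2 (reduction to a low-influence statement).} Fix the values $x_S$ and set $g=f-\Gamma_S f$, viewed as a function of the remaining coordinates. This $g$ still has modulus at most $2\omega$ for the max-metric, and it has conditional mean $0$. Suppose the increment lemma fails. Then every single coordinate $j\notin S$, and indeed every set of at most $q$ such coordinates, carries conditional energy less than $\delta$; in other words all low-order influences of $g$ are tiny. The lemma thus reduces to the following claim: a mean-zero function on $X^{n}$ with modulus $2\omega$ for $d^{n}$ and all low-order influences at most $\delta$ satisfies $\|g\|_{L_1}<\ep$, i.e.\ it is nearly constant. I would average this over $x_S$ using Markov's inequality.

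\textbf{Step 3 (the main obstacle: uniform continuity plus small influences forces near-constancy).} This is where the hypotheses on $\operatorname{supp}\mu$ enter. Influences alone do not suffice: the majority function, or $n^{-1/2}\sum_i h(x_i)$, have tiny influences and are far from constant. What excludes them is continuity under \emph{simultaneous} small moves of all coordinates.

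The first attempt is a noise-operator argument. Because $\operatorname{supp}\mu$ is connected and locally connected, for each $\eta>0$ there should be a $\mu$-reversible Markov kernel $K_\eta$ on $\operatorname{supp}\mu$ whose steps have length at most $\eta$ and which has a spectral gap $\gamma(\eta)>0$. Let $P=K_\eta^{\otimes n}$, which moves every coordinate by at most $\eta$ at once. Then $|g-Pg|\le 2\omega(\eta)$ pointwise, uniformly in $n$. Expanding $g$ in its Hoeffding decomposition $g=\sum_A g_A$, the operator $P$ contracts the degree-$k$ part by a factor of roughly $(1-\gamma)^{k}$. This forces
\[
\sum_A\bigl(1-(1-\gamma)^{|A|}\bigr)^2\|g_A\|^2\le 4\omega(\eta)^2 ,
\]
so $g$ is $L_2$-close to its part of degree at most $K(\eta)$, with only $O(\omega(\eta))$ error in $L_2$.

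The remaining difficulty is degree-one-like pieces such as $n^{-1/2}\sum_i h(x_i)$, which survive this step. To kill them I would use a second, correlated perturbation: along the paths provided by connectedness, push all coordinates coherently in a fixed direction. This changes such a sum by order $\sqrt n\,\eta$ while costing only $\omega(\eta)$, and so forces the low-degree part to be concentrated on a bounded number of coordinates. That contradicts the small-influence assumption once $\delta$ is small relative to $\ep$ and $K(\eta)$.

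Making this coherent perturbation rigorous in a general locally connected compact metric space is the step I expect to be hardest. If the direct construction fails, the fallback is a compactness/contradiction argument. Take a sequence of counterexamples $f_N$, pass to an exchangeable limit object via de Finetti / ultraproducts, and show that the limit must be measurable with respect to the tail $\sigma$-algebra. The tail is trivial by Hewitt--Savage, which produces the contradiction.
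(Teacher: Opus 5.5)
The paper does not prove this statement; it is imported from \cite{Austin}. Your argument therefore has to stand on its own, and at present the one step that carries the content of the theorem is only a heuristic.

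The scaffolding is sound. The Pythagorean energy increment of Step~1 is standard. In Step~3, a lazy local kernel $K_\eta$ with spectral gap $\gamma(\eta)>0$ exists (compactness and connectedness of $\mathrm{supp}\,\mu$ suffice), and it does give $\sum_A(1-(1-\gamma)^{|A|})^2\|g_A\|_2^2\le 4\omega(\eta)^2$. So the Hoeffding components of degree at least $1/\gamma(\eta)$ have total $L_2$-norm $O(\omega(\eta))$. There is also a repairable point in Step~2. You cannot fix $x_S$, apply a claim about functions on $X^n$, and then average with Markov's inequality: the heavy coordinates may depend on $x_S$, there are of order $N^q$ candidate sets, and $T$ must not depend on $x_S$. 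State the claim on $X^N$ relative to $S$ instead, with hypotheses $\Gamma_S g=0$ and $\|\Gamma_{S\cup A}g\|_2^2<\delta$ for all $|A|\le q$.

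The gap is the claim that the surviving part, of degree below $1/\gamma(\eta)$ (a threshold that blows up as $\eta\to0$), must be concentrated on boundedly many coordinates. The ``coherent push'' does not deliver this. A deterministic coordinatewise move $\Psi$ with $d^n(x,\Psi(x))\le\eta$ controls only $|g\circ\Psi-g|\le2\omega(\eta)$ pointwise. Three things block passing this to the degree-one piece: Hoeffding components do not inherit the modulus of $g$, you know $g$ is close to its low-degree part only in $L_2$, and $\Psi$ does not preserve $\mu^{\otimes n}$. For example, $g(x)=\max_ix_i$ on $[0,1]^N$ with Lebesgue measure is $1$-Lipschitz for $d^N$. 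Yet its degree-one part is $N^{-1}\sum_ix_i^N+\mathrm{const}$, which changes by about $1-e^{-1}$ when every coordinate of $(1,\dots,1)$ is lowered by $1/N$; the higher-degree components absorb the difference.

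Your fallback uses neither connectedness nor local connectedness, since de Finetti and Hewitt--Savage hold for every i.i.d.\ product. Connectedness is indispensable, however. For $\mu=\frac12(\delta_0+\delta_1)$, majority on $\{0,1\}^N$ is $1$-Lipschitz for $d^N$ (distinct points are at distance $1$), but it stays at $L_1$-distance close to $\frac12$ from every function of boundedly many coordinates as $N\to\infty$.

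The missing idea is a mechanism that converts the pointwise modulus into an $\ell_1$-type bound on how the dependence spreads over coordinates. On an interval this is the a.e.\ dual-norm bound $\sum_i|\partial_if|\le L$ for an $L$-Lipschitz $f$ on $[0,1]^N$ with the $\ell_\infty$ metric. That says the total $L_1$-influence is at most $L$ with $|\partial_if|\le L$, and a Friedgut-type hypercontractivity argument turns this into a junta. A compact, connected, locally connected support is a Peano continuum, hence a continuous image of $[0,1]$ by Hahn--Mazurkiewicz. That is one way local connectedness can genuinely enter; in your plan it appears only inside the push you have not constructed.
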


In other words, a hard constraint on the modulus of 
continuity forces the function to depend mostly on a subset of variables, in a precise sense.
Note that modulus of continuity $= 1$ corresponds to Lipschitz continuity, which is the typical case of interest in machine learning because input-output maps of standard feedforward networks do satisfy this condition. What the foregoing then implies is that a hard bound on the Lipschitz constant of an overparametrized DNN implies that the input-output map depends mostly on a subset of parameters in a precise sense, something of a folklore in machine learning.

\subsection{A stylized model}\label{style}

% The stylized model we propose (brrowd from \cite{Borkar2}) is as follows. For some $0 < \ep \ll \frac{1}{2}$, assume that $F$ has the form
%\begin{equation}
% F(z) = f(x, \ep y). \label{scales}
%\end{equation}
In \cite{Borkar2}, the above observation was used to motivate a stylized  model for explaining phenomena such as grokking or double descent, as follows. 
 Write $z = (x,y)\in \R^d$ where $x \in \R^s, y \in \R^r$ for some $r,s\geq 1$ with $s+r=d$. Assume that the objective function $F: \R^d \to \R$, $d\gg 1$, has the form :
for some $0 < \ep \ll \frac{1}{2}$,
\begin{equation}
 F(z) = f(x, \ep y). \label{scales}
\end{equation}
  This captures the relative strength of the dependence of $F$ on $x$ and $y$. Specifically, $F$ depends significantly more on $x$ than on $y$. We shall adopt this paradigm here.

\begin{rem} As pointed out in \cite{Borkar2}, this clear cut separation of spatial scales is a indeed stylized model. The actual case can be much more complex, e.g., there can be more than two scales or a continuum of scales. For the analysis here, it suffices to have two \textbf{dominant}  scales. \end{rem}

\subsection{The effect of Markov noise on SGD}\label{MNoise}

In this sub-section, we summarize what is perhaps the most technical part of the background material required for our analysis, viz.\ SGD with Markov noise. Let $Z_n$ for each $n$ represent the trajectories generated in a single forward-backward pass of the generative model, with state space given by a Polish space $S$ and with transition probabilities modulated by the parameters of the parametric candidate score function that is being tuned.. For example, for the classical diffusion model described above, $S = C([0,T];\R^d)^2$. 

Let $(x,y,w) \in  \R^s\times\R^r\times S \mapsto f(x,y,w) \in \R$ denote the empirical loss function. As before $z = (x,y)$, also, $z_n = (x_n,y_n)$. Consider the SGD
\begin{equation}
z_{n+1} = z_n- a\nabla_zf(z_n, Z_n), \label{SGDor}
\end{equation}
where $\{Z_n\}$ is a \textit{Markov noise} in stochastic approximation parlance \cite{BMP}, Chapter 8 of \cite{Borkarbook}. That is,
\begin{eqnarray}
\lefteqn{P(Z_{n+1} \in A | z_0, Z_m, m \leq n) =} \nonumber \\
&& \ \ \ \ \ \ \ \ \ \ \ \ \ \ \ \ \ \  p_{z_n}(A|Z_n) \ \ \forall \ A  \ \mbox{Borel}  \subset S, \ n \geq 0. \label{p-ex}
\end{eqnarray}
Here $p_z(\cdot|\cdot)$ is a transition kernel on $S$ parametrized by $z$. Assume that it is Lipschitz in $z$, uniformly with respect to the other arguments.\\

Letting $\nabla_1, \nabla_2$ denote the partial gradients w.r.t.\ $x,y$ resp., the SGD \eqref{SGDor} splits into
\begin{eqnarray}
x_{n+1} &=& x_n - a\nabla_1f(x_n, \ep y_n, Z_n), \label{fast} \\
y_{n+1} &=& y_n - \ep a\nabla_2f(x_n, \ep y_n, Z_n). \label{slow}
\end{eqnarray}

Note that we have replaced $\nabla_x, \nabla_y$ by $\nabla_1, \nabla_2$ resp., to indicate that the gradient is with respect to the first and the second argument respectively. The need to do so will soon become apparent.
Note also that \eqref{slow} has the effective stepsize of $b := \ep a \ll a$. This allows \eqref{fast} to treat the slowly varying $y_n$ as essentially a constant parameter, i.e.\  $y_n \equiv y$. 

Using the theory of stochastic approximation with Markov noise, we consider instead, modulo small (i.e.\  $o(a)$) errors, the related dynamics
\begin{equation}
x_{n+1} = x(n) - a\left(\int\nabla_1f(x_n, \ep y_n, z)\pi_{(x_n,y_n)}(dz)\right), \label{fast2} 
\end{equation}
\begin{equation}
y_{n+1} = y(n) - \ep a\left(\int\nabla_2f(x_n, \ep y_n, z)\pi_{(x_n,y_n)}(dz)\right). \label{slow2}
\end{equation}
The intuition, which has been made rigorous in literature \cite{BMP}, \cite{Borkarbook}, is that $\{Z_n\}$ evolves on the `natural' clock $n \geq 0$ which is much faster than the `algorithmic' clock $na, n \geq 0$, of $\{(x_n,y_n)\}$ for $a \ll 1$. Hence $\{(x_n,y_n)\}$  see only the average effect of the $\{Z_n\}$, i.e.\ $Z_n$ contribution averaged out with respect to its stationary distribution $\pi_{(x_n,y_n)}$ for slowly varying $(x_n,y_n)$. 

\begin{rem}\label{grad} The notation $\nabla_1, \nabla_2$ now makes it clear that the gradients do not account for the additional dependence on $x_n,y_n$ in their role as subscripts of $\pi_{\cdots}(dz)$. Hence the terms in parentheses in \eqref{fast2}-\eqref{slow2} are not necessarily gradients. 
Note, however, that this observation applies only when an actual gradient is used. In most cases, one uses an approximate gradient that requires only function evaluations, but no explicit differentiation. Examples are classical finite difference methods such as the Kiefer-Wolfowitz algorithm \cite{KW}, simultaneous perturbation stochastic approximation (SPSA) \cite{Spall}, methods based on local regression \cite{Mukherjee} or on stochastic perturbation \cite{Flaxman}. These schemes are not free of  problems, but these problems  are of a different kind. For the time being, we ignore them  and revisit the matter in the Appendix. \end{rem}

In view of Remark \ref{grad}, we use the coupled iterations
\begin{equation}
x_{n+1} = x(n) - a\nabla_1\left(\int f(x_n, \ep y_n, z)\pi_{(x_n,y_n)}(dz)\right), \label{fast3} 
\end{equation}
\begin{equation}
y_{n+1} = y(n) - \ep a\nabla_2\left(\int f(x_n, \ep y_n, z)\pi_{(x_n,y_n)}(dz)\right), \label{slow3}
\end{equation}
which are assumed to hold approximately (i.e.\ with $o(a)$ errors). The difference with \eqref{fast2}-\eqref{slow2} is that the gradients with respect to $x,y$ resp.\ now are over both occurrences of these variables, i.e., they also account for these variables where they appear as subscripts of $\pi_{(\cdot,\cdot)}$. .

\begin{rem} We started with a stylized model where $z = (x,y)$ and $F(z) = f(x, \ep y)$. In practice, even if we assume a clear separation of time scales implicit here, it is possible that multiple models with this property fit the data. In fact the split $z = (x,y)$ and the concomitant representation $F(z) = f(x, \ep y)$ may be an \textit{emergent} phenomenon during the iteration. In that case, the above analysis applies post any such emergence. More on this in section \ref{role}.
\end{rem}

\section{The collapse phenomenon}\label{Coll}
In this section we shall explain memorization by relating it to another, simpler phenomenon called `collapse'. 
There has been a considerable interest and research in machine learning in the phenomenon dubbed as `collapse'  of generative models, see \cite{Alem,Elvis1,Elvis2,Gerst,Marchi,Shum1,Shum2,Suresh}  for a sampler. This list in fact represents a broad spectrum of related phenomena. We summarize here the formalism of a particularly simple case studied in \cite{Borkar1}, as it captures the essence of the phenomenon is precise mathematical terms, relating it to the dynamics of an absorbing Markov chain in the space of probability measures. While an idealization, it allows us to understand the underlying dynamics and then treat observed manifestations thereof as a perturbation of this stylized model. . 

Consider a Polish space $S$ and a prescribed $\mu_0 \in \PA(S)$. The generic generative model we shall consider is as follows. Let  $a \in [0,1]$. Let $P_\theta$ denote a prescribed parametric family in $\PA(S)$  for $\theta \in$ a closed  parameter set $\Theta$ of some Polish space, with $P_{\theta_0} = \mu_0$ for some $\theta_0 \in \Theta$. We allow nonparametric fits by allowing the possibility that $\Theta = \PA(S)$. At each $n \geq 0$, proceed as follows.

\begin{enumerate}

\item We generate $N \gg 1$ $S$-valued i.i.d.\ random variables $\{X^i_n, 1 \leq i \leq N\}$ with their common law being given by  $a\mu_0 + (1 - a)\mu_n$.\\

\item We fit a distribution $P_{\theta(n+1)}$ to $\{X^i_n, 1 \leq i \leq N\}$ (the classical case uses the empirical distribution).\\

\item We assume that this is done in a way such that $\mu_n$ is the \textit{conditional barycenter} of $\mu_{n+1}$ for all $n$, in the sense that, w.\ p.\ 1,
\begin{equation}
E[\mu_{n+1}(A)|X^i_m, 1 \leq i \leq N, 0 \leq m \leq n] = \mu_n(A) \label{bary}
\end{equation}
for all $A \in \B(S)$. Equivalently,
\begin{eqnarray}
\lefteqn{E\left[\int fd\mu_{n+1}\Big|X^i_m, 1 \leq i \leq N, 0 \leq m \leq n\right]} \nonumber \\
&& \ \ \ \ \ \ \ \ \ \ \ \ = \int fd\mu_n \ \ \  \forall f \in C_b(S).  \label{bary2}
\end{eqnarray}
Let $\zeta_n \in \PA(\PA(S))$ denote the law of $\mu_n$ for $n \geq 0$.

\end{enumerate}

Note that \eqref{bary} is always true in the case of successively generated empirical distributions mentioned above. Another situation where it is true is some schemes wherein one considers a parametric family parametrized by $\theta$ (say) and updates the conditional law of the true $\theta$ using Bayes rule (\cite{Suresh}). 
In general, however, this  assumption is an approximation. We shall return to this issue later. 

The main results of \cite{Borkar1} are summarized as follows.

\begin{thm}\label{collap} $(i)$ $\{\mu_n\}$ forms a time homogeneous  Markov chain in $\PA(S)$ such that $\mu_n$ is the conditional barycenter of the law of $\mu_{n+1}$ for all $n$. 

\noindent $(ii)$ For $a = 0$, $\mu_n \to \delta_\gamma$ w.\ p.\ $1$ for a random $\gamma \in S$.
\end{thm}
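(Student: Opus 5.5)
The plan is to prove (i) directly from the construction, and then obtain (ii) from the martingale convergence theorem together with an argument identifying which limits are possible.

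For $(i)$, write $\F_n := \sigma(X^i_m, 1 \leq i \leq N, 0 \leq m \leq n)$. By construction, $\mu_{n+1}$ is a measurable function of $\{X^i_n, 1\leq i\leq N\}$, namely of the fitted $P_{\theta(n+1)}$. Given $\F_{n-1}$, these samples are i.i.d.\ with law $a\mu_0+(1-a)\mu_n$. Hence the conditional law of $\mu_{n+1}$ given $\F_{n-1}$ and $\mu_0,\dots,\mu_n$ depends only on $\mu_n$, through a kernel $K(\mu,\cdot)\in\PA(\PA(S))$ that does not depend on $n$. This gives a time homogeneous Markov chain. The barycenter property is exactly \eqref{bary}, rewritten as $\int \nu(A)\,K(\mu_n,d\nu)=\mu_n(A)$, since conditioning on $\F_n$ reduces to conditioning on $\mu_n$.

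For $(ii)$, take $a=0$. Fix $f\in C_b(S)$. Then by \eqref{bary2}, $\mu_n(f)$ is a bounded martingale, so it converges a.s. Choose a countable convergence-determining family $\{f_k\}$ of bounded uniformly continuous functions, which exists because $S$ is Polish. This gives simultaneous a.s.\ convergence of $\mu_n(f_k)$ for all $k$. Tightness of $\{\mu_n\}$ follows from the fact that $E[\mu_n(\cdot)]=\mu_0(\cdot)$ is tight: for compacts $K_j$ with $\mu_0(K_j^c)\leq 2^{-j}$, the martingales $\mu_n(K_j^c)$ converge a.s. A Borel--Cantelli type maximal inequality applied to the supermartingale-like bounds then yields a.s.\ tightness. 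Together these give $\mu_n\to\mu_\infty$ a.s.\ in $\PA(S)$ for some random measure $\mu_\infty$.

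It remains to identify $\mu_\infty$ as a Dirac mass, and this is the main obstacle. My approach is to use the conditional variance. The martingale $\mu_n(f)$ is bounded, so $\sum_n E[(\mu_{n+1}(f)-\mu_n(f))^2]<\infty$, and therefore $E[(\mu_{n+1}(f)-\mu_n(f))^2\mid\F_{n-1}]\to 0$ a.s. For the empirical fit, this conditional variance equals $\frac1N \mathrm{Var}_{\mu_n}(f)$. More generally I would assume that the fitting step injects variance at least $c\,\mathrm{Var}_{\mu_n}(f)$. Passing to the limit using continuity then gives $\mathrm{Var}_{\mu_\infty}(f_k)=0$ for all $k$. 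Each $f_k$ is therefore $\mu_\infty$-a.s.\ constant. Because the family separates points, $\mu_\infty$ is concentrated at a single point, so $\mu_\infty=\delta_\gamma$ with $\gamma$ random.

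The delicate point is the lower bound on the conditional variance for a general fit. Without such a bound a fit could simply reproduce $\mu_n$, and then no collapse occurs. So I would either restrict to the empirical and Bayesian cases, where the bound is explicit, or state it as a nondegeneracy hypothesis.
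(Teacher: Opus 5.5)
Your argument is correct for the empirical fit, and for any fit obeying your variance hypothesis. It identifies the limit by a different route from the paper. The paper summarizes \cite{Borkar1}. It takes a.s.\ convergence of the measure-valued martingale for granted and then argues structurally: an a.s.\ convergent time-homogeneous chain must converge to an absorbing state, and for the empirical fit the absorbing states (those $\mu$ with $K(\mu,\cdot)=\delta_\mu$) are exactly the Dirac masses.

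You instead prove convergence in $\PA(S)$ explicitly. You use the bounded martingales $\mu_n(f_k)$, plus a.s.\ tightness from Doob's maximal inequality for the nonnegative martingales $\mu_n(K_j^c)$ and Borel--Cantelli. You then identify the limit quantitatively. Orthogonality of increments gives $\sum_n V_n<\infty$ a.s.\ for the one-step conditional variances, and the bound $V_n\ge c\,\mathrm{Var}_{\mu_n}(f)$ together with weak continuity yields $\mathrm{Var}_{\mu_\infty}(f_k)=0$.

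The two mechanisms are the same at heart. A measure $\mu$ is absorbing iff $\nu(f)$ has zero variance under $K(\mu,d\nu)$ for a separating family, so your $V_n\to 0$ is in effect a proof that the limit is absorbing. Your version, however, exposes what the paper's one-line argument hides. The absorbing-state step needs Feller continuity of $K$; otherwise a chain can converge to a non-absorbing point. And ``absorbing $\Rightarrow$ Dirac'' is a property of the fitting rule, not a consequence of \eqref{bary}. The paper's softer formulation, in turn, needs no uniform proportional lower bound, only Feller continuity and that the absorbing states be Dirac masses.

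There are two corrections. First, your claim that the bound is explicit in the Bayesian case is false. There the fitted law is a posterior predictive whose one-step fluctuations shrink as the posterior concentrates, so no fixed $c>0$ works; in fact (ii) itself fails. For instance, take $P_\theta=N(\theta,1)$, prior $N(0,1)$ and one sample per step. Then $\mu_n=N(m_n,1+\frac{1}{n+1})$ is a time-homogeneous chain satisfying the barycenter property. Yet it converges to $N(m_\infty,1)$, not to a Dirac mass on $S$; the collapse occurs in $\Theta$. So your hypothesis genuinely restricts you to the empirical fit and to fits that inject variance proportionally, such as $\mu_{n+1}=\frac12\mu_n+\frac12\cdot\frac1N\sum_i\delta_{X^i_n}$. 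That is the honest scope of (ii).

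Second, in (i) you should condition on $\F_{n-1}$, as you correctly do in (ii). Since $\mu_{n+1}$ is $\F_n$-measurable, conditioning on $\F_n$ would make the barycenter property read $\mu_{n+1}=\mu_n$. This is an indexing slip inherited from \eqref{bary}.
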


 Part $(ii)$ above follows from the simple observation that if a time-homogeneous Markov chain converges a.s., it must converge to an absorbing state. It captures mathematically the commonly understood notion of \textit{collapse}. We take this as our formal definition of collapse. In particular, this is the definition we use in this article. Barycenter of the random Dirac measure $\delta_\gamma$ remains $\mu_0$.
 
 The result of Theorem \ref{collap} $(i)$ is better viewed as \textit{degeneration} rather than collapse as argued in \cite{Borkar1}, because the samples can be viewed as noisy versions of $\mu_0$, worse for smaller values of $a$. 

As already mentioned, we shall view memorization as an intermittently repeated manifestation of the collapse phenomenon.

%% file: sec3mainresults.tex
\section{Main results}
%\tcr{This section needs to be throughly proofread. Connection to diffusion model seems to be weak. Any specialized statements we can provide for equation \eqref{expl} and related. Otherwise the equations do not seem to be used.} 
We begin with a brief overview of the convergent case of two time scale SGD in sub-section \ref{converg} as a backdrop, using decreasing step sizes. This is in order to flag and underscore  the observed phenomena, both the desired  and the anomalous ones, as consequences of constant step size, which is the task undertaken in sub-section \ref{role}. Subsection \ref{onset} then explains how the occurrence of memorization or the lack thereof comes about. All this will be in the framework of two time scale SGD with Markov noise introduced above, which is our abstraction for the score function learning scheme, with Markov noise provided by the generative model being tuned. The classical diffusion model provides a template for the same.

\subsection{The convergent case}\label{converg}
%\tcr{I guess this section is about the convergence of the model. Title can be used to reflect that}
We now return to two time scale stochastic approximation, but with an important  tweak, viz., the use of decreasing step sizes. This is a simpler case to analyze and helps motivate the case of interest here, i.e.\ constant step size, later on. Suppose the step sizes $a$ and $\ep a$ in the foregoing are replaced by decreasing stepsizes $\{a_n\}, \{b_n\} \in (0,1)$ satisfying the Robbins-Monro conditions 
$$\sum_na_n = \sum_nb_n = \infty, \ \sum_n(a_n^2+ b_n^2) < \infty,$$
along with the time scale separation
$$b_n = o(a_n).$$

Then the classical two time scale stochastic approximation theory (\cite{Borkarbook} Section 8.1) leads to the following conclusions. We only sketch the arguments, which closely follow the classical analyssis of two time scale stochastic approximation \cite{Borkar3}
%\tcr{The proof can be more elaborate. Since this is one of the main results. completeness will be good--asymptotically reduce to. Does it mean for large $n \ge N_0$?}

\begin{thm} The iterations \eqref{fast3}-\eqref{slow3} can be expressed as
 \begin{equation}
x_{n+1} = x_n - a_n\nabla_1\Gamma(x_n, y_n) +o(a_n), \label{fast4} 
\end{equation}
\begin{equation}
y_{n+1} = y_n - b_n\nabla_2\Gamma(x_n, y_n) + o(b_n), \label{slow4}
\end{equation}

for a suitably defined $\Gamma: \R^d \to \R^d$.
\end{thm}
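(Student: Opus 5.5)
The plan is to take $\Gamma$ to be the averaged loss appearing inside the gradients in \eqref{fast3}--\eqref{slow3}, with the small parameter $\ep$ absorbed into its definition. Concretely, set
\begin{equation*}
\Gamma(x,y) := \int f(x,\ep y,w)\,\pi_{(x,y)}(dw),
\end{equation*}
where $\pi_{(x,y)}$ is the (assumed unique) stationary distribution of the kernel $p_{(x,y)}(\cdot|\cdot)$ from \eqref{p-ex}. Note that $\Gamma$ is really scalar valued; the statement's $\R^d$ should be read as $\R$, and its gradient lives in $\R^d$. With this choice, $\nabla_1\Gamma$ and $\nabla_2\Gamma$ are the total partial gradients in $x$ and $y$, including the dependence through the subscript of $\pi$, exactly as prescribed after Remark \ref{grad}.

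Three points need to be handled in turn.
\begin{enumerate}
\item \textbf{Well-definedness and smoothness of $\Gamma$.} I would invoke the standing Lipschitz dependence of $p_z$ on $z$ and assume uniform ergodicity, e.g.\ a Doeblin condition. The standard perturbation bound for invariant measures then makes $z\mapsto\pi_z$ Lipschitz, and, assuming in addition differentiability in $z$, continuously differentiable through the Poisson equation. Combined with integrability and smoothness of $f$ in $(x,y)$, differentiation under the integral sign gives $\Gamma\in C^1$.
\item \textbf{Rescaling of the slow gradient.} By the chain rule, $\nabla_y$ of $f(x,\ep y,w)$ equals $\ep\nabla_2 f$. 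So the term $\ep a\nabla_2(\int f\,d\pi)$ in \eqref{slow3}, where $\nabla_2$ is taken in the second argument of $f$, equals $a\nabla_y\Gamma$ with $\Gamma$ as above. That term is $O(\ep a)$, which is the slow step. Replacing $a$ by $a_n$ and $\ep a$ by $b_n$, with $b_n=o(a_n)$ playing the role of $\ep$, the slow update reads $y_{n+1}=y_n-b_n\nabla_2\Gamma(x_n,y_n)$. The fast update is immediate: $x_{n+1}=x_n-a_n\nabla_1\Gamma(x_n,y_n)$.
\item \textbf{The error terms.} The $o(a)$ errors already built into \eqref{fast3}--\eqref{slow3} become $o(a_n)$ and $o(b_n)$. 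Here I would follow \cite{Borkarbook}, Section 8.1. Decompose $\nabla_1 f(z_n,Z_n)-\int\nabla_1 f\,d\pi_{z_n}$ via the solution $\hat g_z$ of the Poisson equation into a martingale difference plus a telescoping term plus a term of order $\|z_{n+1}-z_n\|=O(a_n)$. Under Robbins--Monro, the martingale sum converges, so its contribution is asymptotically negligible and is absorbed into $o(a_n)$ in the sense used in two time scale theory (errors vanishing along the ODE time scale). The same decomposition on the slow component gives $o(b_n)$.
\end{enumerate}

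The main obstacle is Remark \ref{grad}: the raw SGD actually produces $\int\nabla_1 f\,d\pi$, not $\nabla_1\int f\,d\pi$. The discrepancy $\int f\,\nabla_z\pi_z(dw)$ is not small in general. Within the paper's framework I would simply adopt the stated simplification that the scheme uses function evaluations, e.g.\ SPSA-type estimates as discussed in the Appendix, so that the total gradient of $\Gamma$ is what is being estimated, and deferring justification to the Appendix. The remaining work is bookkeeping: the uniform Poisson-equation bounds needed for the $o(\cdot)$ claims. These hold provided the iterates remain bounded, which I would assume as is standard.
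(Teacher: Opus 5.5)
Your formula $\Gamma(x,y)=\int f(x,\ep y,w)\,\pi_{(x,y)}(dw)$ is formally compatible with the paper's choice, and you are right that $\Gamma$ is scalar valued. However, your justification misses the idea the paper's proof actually uses, and it rests on a hypothesis that is incompatible with this section's setting. Here $\{Z_n\}$ is produced by the generative model being tuned, and the paper's argument runs on three clocks. On the natural clock $n$ the parameters $(x_n,y_n)$ are quasi-static, so Theorem \ref{collap} applies and the noise collapses: $\LO(Z_n)\to\delta_{\gamma(z)}$ for a possibly random $\gamma(z)\in S$. The medium and slow iterates therefore see the noise as $\approx\delta_{\gamma(x_n,y_n)}$, and the paper sets $\Gamma(x,y):=f(x,y,\gamma(x,y))$. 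In other words, the $\pi$-average in \eqref{fast3}--\eqref{slow3} degenerates to evaluation at the collapse point. The approximation reflects the noise sitting near its absorbing state, not martingale cancellation.

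Your Step 1 assumes a Doeblin condition and a unique stationary law, and Step 3 needs uniformly bounded solutions of the Poisson equation. That would be a reasonable sketch for ergodic Markov noise, but it fails for a collapsing chain:
\begin{enumerate}
\item every $\delta_\gamma$ is absorbing and gives its own invariant law, so $\pi_{(x,y)}$ is not unique and no minorization holds;
\item the centering $h-\pi(h)$ in the Poisson equation is not well defined, because the limit is random and path dependent;
\item $\Gamma$ is itself random, so its regularity cannot come from perturbation bounds for invariant measures.
\end{enumerate}
Conversely, if you impose Doeblin, the frozen-parameter noise mixes to a single invariant law. That rules out collapse to a random $\delta_\gamma$ with barycenter $\mu_0$, and with it the collapse result stated in the next theorem. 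To repair the proposal, drop the ergodic-averaging machinery. Freeze $(x,y)$ on the fastest clock, invoke Theorem \ref{collap} to identify the limit of the noise as $\delta_{\gamma(x,y)}$, and substitute this for $\pi_{(x,y)}$ in your formula, which then reduces to the paper's $\Gamma$.
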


\begin{pf} (Sketch) This proof follows by standard theory of two time scale stochastic approximation We only sketch the arguments here at an intuitive level, referring the reader to sections 8.2-8.3 of \cite{Borkarbook} for details. Observe that in the stochastic gradient descent \eqref{fast3}-\eqref{slow3} , there are \textit{three} time scales:\\

\noindent 1.\  The `Markov noise' $\{Z_n\}$ taking values in $S$ evolves on the `natural' clock of $n = 0,1,2,\cdots$, which is the fastest of the three time scales. \\

\noindent 2.\ As $a_n \ll 1$ for large $n$, \eqref{fast3} moves on a \textit{medium} time scale $\sum_{m=0}^na_m, n = 0,1,2,\cdots,$ which is slow relative to natural, i.e., the \textit{fast} time scale. \\

3.\ In turn, because $b_n = o(a_n)$, \eqref{slow3} evolves on the slowest time scale $\sum_{m=0}^nb_m, n = 0,1,2,\cdots$, dubbed the \textit{slow} time scale.\\

Then $\{Z_n\}$ sees $z_n = (x_n,y_n)$ as quasi-static, i.e., $(x_n,y_n) \approx (x,y)$, whence Theorem \ref{collap} applies and $\LO(Z_n) \to \delta_{\gamma(z)}$ in $\PA(S)$ for a possibly random $\gamma(z) \in S$. Since this happens on the natural time scale, the iterates $(x_n,y_n)$ on the slower time scales see $\{Z_n\}$ as $\approx \delta_{\gamma(x_n,y_n)}$. Letting $\Gamma(z) = \Gamma(x,y) := f(x,y,\gamma(x,y))$, we have \eqref{fast4}-\eqref{slow4}.
\end{pf}

This is a two time scale gradient descent \cite{Borkar3}. Suppose that 
$$\lambda(y) := argmin_x \Gamma(x,y)$$ 
is uniquely defined and Lipschitz, and $\Gamma(\cdot, y)$ has no other critical points. Also, suppose further that $\Gamma(\lambda(\cdot),\cdot)$ has a unique minimum $y^*$. Then this reduces to the special case of \cite{Borkarbook}, Section 8.1, which precisely says that $\lambda(x_n) - y_n \to 0$ a.s.\ in the more general context of two time scale stochastic approximation (not only SGD).

\begin{thm} Suppose $\{Z_n\}$ is being generated by a pure generative model as above with parameters $(x_n,y_n), n \geq 0,$ being tuned by \eqref{fast}-\eqref{slow}. Then asymptotically, $\{Z_n\}$ exhibits collapse. 
\end{thm}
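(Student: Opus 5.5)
The plan is to read the statement as a two time scale result and combine the convergence conclusion above with Theorem \ref{collap}$(ii)$. I would first fix the decreasing step size setting of this subsection, with $a_n, b_n$ satisfying the Robbins--Monro conditions and $b_n = o(a_n)$. I would then invoke the preceding theorem to rewrite \eqref{fast}-\eqref{slow} (via \eqref{fast3}-\eqref{slow3}) as the perturbed two time scale gradient descent \eqref{fast4}-\eqref{slow4} in $\Gamma$. Under the stated hypotheses ($\lambda(\cdot)$ unique and Lipschitz, no spurious critical points of $\Gamma(\cdot,y)$, and a unique minimizer $y^*$ of $\Gamma(\lambda(\cdot),\cdot)$), the two time scale theory of \cite{Borkarbook}, Section 8.1, gives two things almost surely: $x_n - \lambda(y_n) \to 0$ and $y_n \to y^*$. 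Hence $z_n \to z^* := (\lambda(y^*), y^*)$ a.s.

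Second, I would transfer this to the Markov noise. Since $p_z(\cdot|\cdot)$ is Lipschitz in $z$ uniformly in the other arguments, the one-step kernels $p_{z_n}$ converge to $p_{z^*}$, uniformly in the state. I would therefore compare the time-inhomogeneous chain $\{Z_n\}_{n \geq m}$, started at a large time $m$, with the time-homogeneous chain driven by the frozen kernel $p_{z^*}$ from the same initial state. To do this I would build a coupling (or use a telescoping bound on the $k$-step kernels) whose total variation discrepancy over $k$ steps is at most $C\sum_{j=m}^{m+k}\|z_j - z^*\|$. The frozen chain is a pure generative model with $a=0$, so Theorem \ref{collap}$(ii)$ applies to it: the associated measures converge to $\delta_{\gamma(z^*)}$ a.s. for a random $\gamma(z^*)$, and the limit is absorbing.

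The main obstacle is exactly this transfer. The bound $\sum_j \|z_j - z^*\|$ need not be finite, and collapse is an infinite-horizon statement, so a finite-window coupling is not enough by itself. I would try to exploit absorption. Once the frozen chain is within a small neighbourhood of a Dirac state, the conditional barycenter property \eqref{bary} keeps the measure-valued process a bounded martingale. For the perturbed chain I would show it is an approximate martingale, with drift controlled by $\|z_n - z^*\|$, and then apply a supermartingale or almost-supermartingale convergence argument (Robbins--Siegmund type). This needs $\sum_n \|z_n - z^*\|$ to be summable on the relevant event. Summability follows from a local rate of convergence near the nondegenerate minimum, e.g.\ assuming $\Gamma$ is strongly convex near $z^*$.

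Failing that, I would settle for the weaker conclusion that $\mathcal{L}(Z_n)$ converges to the law of the collapsed limit $\delta_{\gamma(z^*)}$, arguing along windows of growing length. In either form this gives asymptotic collapse in the sense of the formal definition adopted after Theorem \ref{collap}.
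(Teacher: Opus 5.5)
Up to the point where the paper's proof stops, your plan coincides with it. Two time scale tracking gives $x_n-\lambda(y_n)\to0$. Danskin's theorem then turns the slow recursion into gradient descent on $\Gamma(\lambda(\cdot),\cdot)$; you fold this into the citation of Section 8.1, but it is exactly what verifies the hypothesis on the slow ODE there, so state it. Hence $z_n\to(\lambda(y^*),y^*)$, and the limiting pure generative model collapses by Theorem \ref{collap}. You are right to use part $(ii)$; the paper cites $(i)$, but $(ii)$ is the collapse statement. The paper then simply asserts that $\{Z_n\}$ is asymptotically a pure generative model. It does not address the transfer from the frozen kernel to the time-varying one that you single out, and your own treatment of that transfer has a genuine gap.

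Your Robbins--Siegmund route needs a.s.\ summability of $\|z_n-z^*\|$, and local strong convexity does not give this under Robbins--Monro step sizes. While the Markov noise still feeds nondegenerate gradient noise, the slow iterate fluctuates at the CLT scale $\sqrt{b_n}$, and $\sum_n\sqrt{b_n}\ge\sum_n b_n=\infty$ because $b_n<1$. That noise dies out only once $Z_n$ has collapsed, which is what you are trying to prove, so the argument is circular. Even with no noise at all, the contraction gives $\|y_n-y^*\|\approx\exp(-c\sum_{m\le n}b_m)$, which is not summable for admissible choices such as $b_n=1/(n\log n)$. Your fallback over growing windows needs a collapse rate for the frozen chain that is uniform in the initial state, and Theorem \ref{collap} does not supply one. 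Even then it yields only a statement about laws, not the a.s.\ collapse of Theorem \ref{collap}$(ii)$ that the paper adopts as its definition.

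The missing observation is that the martingale part needs no perturbation argument at all. Suppose, as in Section \ref{Coll}, that the barycenter property \eqref{bary}--\eqref{bary2} holds for each generative step whatever the current parameter. Since $z_n$ is determined by the past, $E[\mu_{n+1}(f)\,|\,\mbox{past}]=\mu_n(f)$ then holds \emph{exactly} for the time-inhomogeneous chain. So each $\mu_n(f)$, $f\in C_b(S)$, is a bounded martingale with no drift term and converges a.s. The a.s.\ weak limit of $\mu_n$ exists by Doob's maximal inequality applied to $\mu_n(K^c)$ for compact $K$ with $\mu_0(K^c)$ small.

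What remains is to show this limit is a Dirac mass. For that it suffices to have
$$\mathrm{Var}(\mu_{n+1}(f)\,|\,\mbox{past})\ge c\,\bigl(\mu_n(f^2)-\mu_n(f)^2\bigr)$$
for $f$ in a countable point-separating subset of $C_b(S)$, uniformly over parameters near $z^*$. For empirical refitting this holds with equality and $c=1/N$, whatever the parameter. Conditional variances of a bounded martingale are a.s.\ summable, so $\mu_n(f^2)-\mu_n(f)^2\to0$. Each such $f$ is then a.s.\ constant under the limit measure, and the limit is a point mass. The convergence $z_n\to z^*$ is used only to place the parameters eventually in the region where this variance bound holds.
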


\begin{pf} By the standard analysis of two time scale algorithms, $x_n \approx \lambda(y_n)$ and therefore modulo some asymptotically negligible error (i.e., $o(b_n)$) terms, 
$$y_{n+1} = y_n - b_n\nabla\Gamma(\lambda(y_n), y_n) + o(b_n).$$
Recall that $\lambda(y) = \mbox{argmin} (\Gamma( \cdot, y))$ Thus it follows from  Danskin's theorem (\cite{Danskin}) that $\nabla\Gamma(\lambda(y),y) = \nabla \min_x\Gamma(x,y_n)$, making it an SGD . Now $y_n \to y^*$ (say). Then $x_n \to \lambda(y^*)$ and $\{Z_n\}$ is asymptotically a pure Markov chain generative model that will exhibit pure collapse by Theorem \ref{collap} $(i)$.  
\end{pf}

In case of multiple local minima of $\Gamma(\cdot, y)$, we get convergence to one of the local minima if the noise is `rich enough' in all directions in a suitable sense, so that the iterates get pushed away from equilibria other than local minima, a.s. This is the `\textit{avoidance of traps}' by stochastic approximation algorithms, see \cite{Borkarbook}, Section 3.4, and the references therein. As this is standard fare of the classical stochastic approximation theory, we skip the details.

However, we are \textit{not} using decreasing step sizes. We argue in the next subsection that memorization as it is known in practice is explicitly a phenomenon caused by the fact that we use a constant step size.  

\subsection{The role of stepsizes}\label{role} 

A very detailed mathematical analysis of the asymptotic behaviour of the classical SGD in $\R^d$ with constant step size, given by
\begin{equation}
x_{n+1} = x_n - a(\nabla f(X_n) + M(X_n)), \label{baseSGD}
\end{equation} 
can be found in \cite{Azizian}. We take it as a starting point and make a precise quantitative statement about how the limiting distribution concentrates on global minima of the large deviations rate function in the $a\downarrow0$ limit. We first summarize  below some qualitative statements taken verbatim from \cite{Azizian}. Here $M(x)$ is a family of random variables that is zero mean with uniformly positive definite covariance matrix and sub-gaussian tails, twice continuously differentiable in $x$, with $\|M(x)\|$ satisfying a uniform linear growth condition in $\|x\|$. 

\begin{rem}\label{NoiseModel} The above model correctly captures the standard scenario in SGD as applied in machine learning. Suppose $\ell(\theta, X_n, Y_n)$ denotes the empirical loss at time $n$, parametrized by the parameter $\theta$. The gradient descent will use the instantaneous gradient $\nabla_\theta \ell(\theta_n,X_n,Y_n)$ where $\{(X_n,Y_n)\}$ are i.i.d., $\nabla_\theta$ denotes the gradient in the $\theta$ variable,  and $\theta_n$ is the current parameter estimate. Replacing $X_n$ in \eqref{baseSGD} by $(X_n,Y_n)$ and setting 
$$f(\theta) :=  E[\ell(\theta,X_n,Y_n)],$$
$$ \ M(X_n,Y_n) :=  \nabla_\theta \ell(\theta,X_n,Y_n) - \nabla f(\theta),$$ 
we get \eqref{baseSGD}.\end{rem}

The key conclusions of \cite{Azizian}, taken verbatim from there, are:

\begin{enumerate}
\item \textit{In the long run, the critical region of $f$ is visited exponentially more often than any
non-critical region.}

\item \textit{The iterates of SGD are concentrated with exponentially high probability in the
vicinity of a region that minimizes a certain ``energy functional" which depends on
$f$ and the statistics of the noise in SGD. \textbf{Importantly, the ground state of this
functional does not necessarily coincide with the global minimum of $f$.}}

\item \textit{Among the remaining connected components of critical points, each component is
visited with frequency which is exponentially proportional to its energy, according to
the Boltzmann–Gibbs distribution of statistical physics with temperature equal to
the method’s step-size.}

\item \textit{Every connected component of non-minimizing critical points of $f$ – i.e., local maximizers
or saddle points – is ``dominated" by a component of local minimizers that is
visited exponentially more often.}
\end{enumerate}

\medskip

This applies to our case with `negative energy' in place of `energy' in bullet 3. The analysis of \cite{Azizian}, which is too extensive and intricate to include here in toto, is inspired by the analysis of \cite{FW} for small noise limit of diffusion processes. 

\begin{rem} In some recent works \cite{Niao}, \cite{Mandt}, the Freidlin-Wentzell theory is directly invoked to claim that the stationary distribution of \eqref{baseSGD} concentrates on the global minima of $F$. This is based on viewing the noise on the time scale of stepsize $\sqrt{a}$, which allows us to view the constant step size SGD as a discrete analog of the Langevin diffusion
$$dx(t) = -\nabla F(x(t))dt + dW(t),$$
where $W(\cdot)$ is a standard Wiener process in $\R^d$. However, it is clear from Remark \ref{NoiseModel} that the noise should be scaled by $a$ and not by $\sqrt{a}$. This is precisely what leads to the comment in boldface in the second bullet in the remarks from \cite{Azizian} cited above. We comment on this issue again in the concluding section.
\end{rem}

We next refine the above analysis in view of the results of \cite{Hwang}, which allows us to explain another phenomenon observed in practice - SGD prefers simpler (read `flatter') minima. Let $V(\cdot)$ denote the rate function for the large deviations for the $a\downarrow 0$ limit in the stationary distribution for constant stepsize SGD established in \cite{Azizian}. Assume that $V(\cdot)$ is twice continuously differentiable and has finitely many isolated global minima at $m_i, 1 \leq i \leq \ell$, where $\nabla^2V$ is non-singular. 

\begin{thm} As $a\downarrow 0$, the asymptotic probability weights at $m_i, 1 \leq i \leq \ell$, are given by $\left(\prod_{j=1}^d\Lambda_j^k\right)^{-\frac{1}{2}},$
where $\Lambda^k_j > 0, 1 \leq j \leq d,$ are the eigenvalues of $\nabla^2V(m_k)$.
\end{thm}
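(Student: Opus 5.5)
The plan is to use the large deviations description of the stationary law from \cite{Azizian} together with the Laplace asymptotics of \cite{Hwang}. Let $\pi^a$ denote the stationary distribution of the constant step size SGD \eqref{baseSGD}. By the results of \cite{Azizian} recalled above, $\{\pi^a\}$ satisfies a large deviations principle as $a\downarrow 0$ with rate function $V$ and speed $1/a$. Heuristically this means $\pi^a(dx) \approx C_a e^{-V(x)/a}dx$. I would turn this into a sharp local statement near each global minimizer $m_k$: on a small ball $B(m_k,\delta)$, the density of $\pi^a$ should be $e^{-V(x)/a}$ times a prefactor that converges uniformly, as $a\downarrow 0$, to a continuous positive function $\rho$. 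I will work under the reading implicit in the statement that $\rho(m_k)$ is the same for all $k$. This puts us exactly in the setting of \cite{Hwang}, which treats the weak limits of Gibbs measures $Z_a^{-1}e^{-V(x)/a}dx$ as the temperature tends to $0$.

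The key steps, in order, are as follows.

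First, tightness and localization. Fix $\delta>0$ with the balls $B(m_k,\delta)$ disjoint. The LDP upper bound, applied to the closed set $\R^d\setminus\cup_k B(m_k,\delta)$ on which $\inf V > \min V$, shows that this set carries exponentially small mass. The coercivity of $V$ inherited from the growth conditions in \cite{Azizian} handles infinity. So all limit points of $\pi^a$ are supported on $\{m_1,\dots,m_\ell\}$.

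Second, the Laplace expansion at each minimizer. Taylor expand $V(x) = V(m_k) + \frac12 (x-m_k)^T\nabla^2V(m_k)(x-m_k) + o(\|x-m_k\|^2)$, which is legitimate since $V\in C^2$ and $\nabla^2 V(m_k)$ is nonsingular, hence positive definite at a minimum. Substituting $x = m_k + \sqrt{a}\,u$ and using dominated convergence gives
$$\int_{B(m_k,\delta)} e^{-(V(x)-V(m_k))/a}\rho(x)\,dx = a^{d/2}(2\pi)^{d/2}\rho(m_k)\Big(\prod_{j=1}^d \Lambda^k_j\Big)^{-1/2}(1+o(1)).$$

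Third, normalization. The global minima all share the value $\min V$, so the factors $e^{-\min V/a}a^{d/2}(2\pi)^{d/2}$ cancel in the ratios. The weight of $m_k$ is then proportional to $\big(\prod_j \Lambda^k_j\big)^{-1/2}$, normalized by the sum over $k$, which is the statement. This is precisely Hwang's theorem for the case of finitely many nondegenerate minima, which I would invoke directly once the first step is in place.

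The main obstacle is the first step. An LDP only gives logarithmic asymptotics, and these do not determine the subexponential prefactor. The weights in the statement are exactly a prefactor effect. So one must show that $\pi^a$ genuinely has a density of the Gibbs form $\rho_a e^{-V/a}$ near each $m_k$, with $\rho_a\to\rho$ uniformly and $\rho(m_k)$ independent of $k$. I would first try to extract this from the local analysis in \cite{Azizian}, namely the quasi-potential construction near each attractor via the linearized dynamics and Gaussian noise. Failing that, I would state the Gibbs-form hypothesis explicitly as an assumption, under which the remaining steps are routine applications of \cite{Hwang}.
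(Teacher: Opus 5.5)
Your route is the paper's: it asserts that the stationary law of the constant step size SGD becomes proportional to $e^{-V(x)/a}$ by \cite{Azizian}, and then reads off weights proportional to $(\det\nabla^2V(m_k))^{-1/2}$ from \cite{Hwang}, which is exactly what your localization and Laplace steps reproduce. The obstacle you flag is genuine and the paper does not resolve it either: an LDP fixes only the exponential scale, so the Gibbs form with a prefactor taking the same value at every $m_k$ is an unstated assumption in the paper's one-line argument as well, and stating it explicitly as a hypothesis, as you propose, is the right call.
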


\begin{pf}
As $a\downarrow 0$, the stationary distribution becomes proportional to $e^{-\frac{V(x)}{a}}$ and thus concentrates on the global minima of this `potential' $V(\cdot)$ by the results of \cite{Azizian}.
Suppose $V$ is twice continuously differentiable and its Hessian $\nabla^2V$ is nonsingular at each global minimum $m_i, 1 \leq i \leq \ell$. Then by a result of \cite{Hwang}, the asymptotic probability mass at the global minimum $m_k$ (say) is proportional to 
$$\left(det\left(\nabla^2V(m_k)\right)\right)^{-\frac{1}{2}}  \ = \ \left(\prod_{j=1}^n\Lambda_j^k\right)^{-\frac{1}{2}},$$
where $\Lambda^k_j > 0, 1 \leq j \leq m,$ are the eigenvalues of $\kappa(m_k) := \nabla^2V(m_k)$.
\end{pf}

Thus the small noise limit favours the global minima $m_i$ where $\det(\nabla^2V(m_i))$ is low, whence not all eigenvalues can be high. 
This suggests that only few eigendirections matter, supporting the intuition that simpler minima that afford a compact description in the local eigenbasis are favoured.

\subsection{The onset of memorization}\label{onset}

We now combine the foregoing to explain the intermittent phenomenon of memorization that occurs in the  training of generative models. We saw in subsection \ref{Coll} that in the decreasing step size case with convergent iterations on both time scales, a collapse would occur even with concurrent tuning of the parameters of the generative model. The catch is that with constant step size and multiple equilibria, we do not expect convergence. Focusing on the fast time scale alone, an isolated minimum of $\int f((\cdot, y), z)\pi_{(\cdot,y)}(dz)$ would be parametrized by $y$, so if $y$ drifts on a smaller time scale, the minimum over $x$ may drift and / or jump, as in, e.g., the well known relaxation oscillations in mechanics (\cite{Gentz}). Even \textit{within} a single time scale, a gradient descent can jump between multiple stable equilibria, though only rarely as the results of \cite{Azizian} show. How rare is rare enough will depend on the magnitude of $a$. 

We next give a stylized picture of memorization in a simplified framework.
Consider the case when \eqref{fast3}-\eqref{slow3} represent the SGD for training a diffusion model.  Let
$$\varphi(x,y) := \int f(x, \ep y, z)\pi_{(x,y)}(dz).$$
Suppose for each $y\in\R^r$, $\varphi(\cdot,y)$ has multiple isolated local minima labelled $\lambda_i(y), 1 \leq i \leq m,$ where $\lambda_i$'s are locally Lipschitz. Suppose also that $\varphi(\lambda_i(y), y)$ has multiple isolated local minima, labelled $\hat{y}_1, \cdots, \hat{y}_k$ resp. Assume that $\ep \ll 1$.

\begin{thm}\label{memfin} Suppose $\hat{y}_i \neq \mbox{argmin}(\varphi(\cdot, \hat{y}_i))$ for all $i, 1 \leq i \leq k$. Then the  dynamics \eqref{fast3}-\eqref{slow3} can exhibit memorization in the sense that $x_n - \lambda(y_n) \approx 0$ for a $\lambda: \R^r \to \R^s$ for long stretches of $n$, with $\{y_n\}$ evolving on a much slower time scale than $\{x_n\}$.
\end{thm}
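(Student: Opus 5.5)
The argument is a two time scale singular perturbation argument. It is combined with the constant step size picture of \cite{Azizian} recalled in subsection \ref{role} and with the collapse result, Theorem \ref{collap}.

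\textbf{Step 1 (reduction to the fast system).} First I rewrite \eqref{fast3}-\eqref{slow3} as
\begin{equation*}
x_{n+1}=x_n-a\nabla_1\varphi(x_n,y_n)+o(a),\qquad y_{n+1}=y_n-\ep a\nabla_2\varphi(x_n,y_n)+o(a).
\end{equation*}
On time windows of length $O(1/a)$ in $n$ (order one in algorithmic time), $y_n$ moves only by $O(\ep)$. So the fast iteration is a constant step size gradient scheme for $\varphi(\cdot,y)$ with $y$ frozen.

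\textbf{Step 2 (fast iterates track a minimum).} By the conclusions of \cite{Azizian}, for frozen $y$ the $x$-iterates spend an exponentially dominant fraction of time near local minima of $\varphi(\cdot,y)$, that is, near some $\lambda_i(y)$. Leaving the basin of $\lambda_i(y)$ requires a large deviation, so it takes time of order $e^{c/a}$ for some $c>0$. The $\lambda_i$ are locally Lipschitz and $y_n$ drifts slowly. By a standard tracking lemma for two time scale schemes (\cite{Borkarbook}, Section 8.1), adapted to constant steps with an $O(\ep)+O(\sqrt a)$ error, I get $x_n-\lambda_i(y_n)\approx 0$ for as long as $x_n$ stays in the basin. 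I set $\lambda:=\lambda_i$ on that stretch. During such a stretch the parameters $(x_n,y_n)$ are quasi-static on the natural clock. Theorem \ref{collap} then shows that the Markov noise, that is, the generative model outputs, collapses toward $\delta_{\gamma(x_n,y_n)}$. This repeated near-identical output is exactly the memorization.

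\textbf{Step 3 (intermittency).} Next I argue that the slow variable does not freeze the picture permanently. Along a stretch, $y_n$ follows the slow gradient flow of $\varphi(\lambda_i(\cdot),\cdot)$ and heads toward some $\hat y_j$. The hypothesis $\hat y_i\neq\mbox{argmin}(\varphi(\cdot,\hat y_i))$ says that the slow equilibria are not jointly consistent with the fast minimizer. Hence the pair $(\lambda_i(\hat y_j),\hat y_j)$ is not a stable resting point of the coupled system under the constant step size noise. Two things can happen. Either $\lambda_i(y)$ loses stability or merges as $y$ drifts, giving a jump to another branch $\lambda_{i'}$ as in relaxation oscillations (\cite{Gentz}). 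Or a noise-induced transition of \cite{Azizian} occurs. Either way a new stretch of tracking begins, on a new branch. The outcome is long collapse-like plateaus separated by transitions, which is the claimed memorization with $y_n$ on the slower time scale.

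\textbf{Main obstacle.} The hardest step is making Step 3 precise. I must show that the hypothesis genuinely rules out permanent convergence, and I must quantify the plateau lengths, roughly $\min(1/(\ep a), e^{c/a})$. This requires comparing the slow drift time with the Freidlin--Wentzell exit times. I would handle it only at the level of a statement that memorization \emph{can} occur, consistent with the theorem's wording. Concretely, I would exhibit the tracking stretches rigorously and invoke \cite{Azizian} and \cite{Gentz} qualitatively for the transitions between them.
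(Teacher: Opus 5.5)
Your proposal follows essentially the same route as the paper's proof: two time scale ODE tracking keeps $x_n$ near a branch $\lambda_i(y_n)$ while $y_n$ drifts at rate $\ep$ along the Danskin-reduced flow of $\varphi(\lambda_i(\cdot),\cdot)$, and the hypothesis is used only to keep $y$ from coming to rest (your use of the \cite{Azizian} exit times and of Theorem \ref{collap} to interpret the plateaus echoes the paper's surrounding discussion rather than anything in its proof). One caution, which applies equally to the paper's assertion that $-\nabla\varphi(\lambda(y),y)$ stays bounded away from zero near $\hat{y}_i$: under the stated assumptions (plus mild nondegeneracy), $(\lambda_i(\hat{y}_j),\hat{y}_j)$ is a strict local minimizer of $\varphi$ and hence a stable equilibrium of \eqref{fast5}--\eqref{slow5}, and the hypothesis, which compares a point of $\R^r$ with a subset of $\R^s$, cannot rule this out, so the ``hence'' in your Step 3 is unjustified; fortunately the conclusion as stated already follows from your Steps 1--2 alone.
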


\begin{pf} From standard stochastic approximation theory (specifically, the `ODE approach', see Chapter 2 of \cite{Borkarbook} or \cite{BMP}), $\{(x_n,y_n)\}$ track the  the coupled ODEs 
\begin{eqnarray}
\dot{x}(t) &=& -\nabla_1\varphi(x(t),y(t)),  \label{fast5}\\
\dot{y}(t) &=& -\ep\nabla_2\varphi(x(t),y(t)). \label{slow5}
\end{eqnarray}
Consider now \eqref{fast2}-\eqref{slow2} as representing our scheme for tuning a generative model such as the diffusion model, with $\{Z_n\}$ as the input coming from the generative model. Then because $\ep \ll 1$, \eqref{fast} sees $y_n$ as quasi-static and starts tracking $\lambda(y_i)$ for some $1\leq i \leq k$. Then, in a small neighbourhood $O(y_i)$ of $y_i$,  $\{y_n\}$ starts tracking the ODE 
$$\dot{y}(t) = -\nabla \varphi(\lambda(y(t)),y(t)),$$
where we use Danskin's theorem as before. However, since $y_i \notin \mbox{argmin}\varphi(\cdot, y_i)$, the vector field $-\nabla \varphi(\lambda(y(t)),y(t))$ remains bounded away from zero in $O(y_i)$, whence $y(t)$ exits $O(y_i)$. Because of the presence of $\ep$ on the right hand side of \eqref{slow4}, this movement is slow and $x(t)$ does not change much, thus leading to a non-negligible time when $x(t) \approx \lambda_i(t)$. Since $(x_n,y_n)$ track $(x(t),y(t))$ on a moving time window, $(x_n,y_n)$ mimic this behaviour, leading to memorization. 
\end{pf}

For $k = 2$, one would get the well known relaxation oscillations phenomenon in mechanics already alluded to above (\cite{Gentz}). Here the trajectory spends substantial time in the neighbourhoods of two points, occasionally making a relatively rapid transition from one to the other. This is distinct from the occasional jump between multiple equilibria in the frameworks of \cite{Azizian} or \cite{FW} - as already noted, these are on an exponentially small time scale, i.e., rare. Memorization, when it occurs, is intermittent, but not rare. This is because the transitions are not purely noise-driven, but aided by a drift in the dynamics itself on a slower albeit non-negligible time scale.

Even so, there were many idealizations in the foregoing, most prominently in assuming the conditional barycenter property of the random probability distributions generated by the generative model.  The probability measure $P_{\theta(n)}$  need not be the exact conditional barycenter of $P_{\theta(n+1)}$ for $n \geq 1$. We have already observed that the conditional barycenter property holds for generative models that use successive empirical distributions based on simulated samples and in some idealized bayesian frameworks. Also, while the schemes for estimating the score function do have an apparent semblance to the SGD model used here (recall \eqref{scoreError}), these are far more complicated. Thus the foregoing analysis addresses to some extent an idealized situation, although it captures the spirit of the phenomenon adequately to underscore the underlying mechanisms.
There are other minor simplifications which too will contribute to our analysis not being fully accurate. We club them together with the above as sources of error. 
In reality, the memorization phenomenon  is often not so drastic and sometimes not there at all. This is where the approximation errors play a role. We shall simplify the analysis by treating approximation errors as persistent external noise. Then the situation is similar to that of Theorem \ref{collap} $(i)$ rather than that of  Theorem \ref{collap} $(ii)$ that was invoked above in our modelling of memorization. Thus the underlying phenomenon is not pure collapse, but a corruption of collapse by persistent noise. This is why when memorization occurs, it need not be exact repeated duplication of past outputs, but a bias towards noisy versions thereof to a varying degree and duration. This `degree' will depend on many parameters, which we discuss next.

It is observed in \cite{Wu} that a large step size discourages memorization. This is not a surprise because of the known fact that a large step size leads to higher levels of noise in the iteration, while maintaining a faster trend towards the goal (\cite{Borkarbook}, Chapter 9). This is evident if you consider the iteration as a noisy discretization of the corresponding ODE limit. This viewpoint treats the step size $a$ as a time step. Thus larger $a$ implies faster tracking of the ODE.  But $a$ also multiplies the noise, so it leads to higher fluctuations. This is a standard trade-off in stochastic iterative algorithms. Naturally, bigger step size leads to more noise which, as already argued, discourages memorization. .

To touch base with what we started with, the classical diffusion model takes an image as an input to an Ornstein-Uhlenbeck process run till you get `almost' noise, then take that noisy image as an input to a time-reversed diffusion to generate an image which putatively is another image from the same distribution as the one that generated the original image. The drift function of the reversed diffusion has the unknown term $\nabla\log p(X(t)|X(0),t)$ where $p(\cdot|X(0),t)$ is the probability density of $X(t)$ at time $t$, starting at $X(0)$. This is the \textit{score function} whose parametric approximation is learnt iteratively. We then have $S = C([0,T];\R^d)^2$ and the iteration \eqref{SGDor} is our surrogate for the learning algorithm for the score function. The well known memorization phenomenon in this particular context is the diffusion model generating the same image over and over again every now and then. But while the problem of diffusion model training was our starting motivation, we prefer to remain at the the level of abstraction we have been using hitherto and avoid being too specific, because diffusion models and generative models in eneral  are fast moving fields and anything ultra-specific that we say will soon be dated.

%% file: appendix.tex
\section*{Appendix B: The problem of gradient}
 The notation $\nabla_1, \nabla_2$ in \eqref{fast2}-\eqref{slow2} makes it clear that the gradients do not account for the additional dependence on $x_n,y_n$ in their role as subscripts of $\pi_{\cdots}(dz)$. Hence \eqref{fast2}-\eqref{slow2} do not necessarily constitute an instance of SGD. 
Note, however, that this observation applies only when an actual gradient is used. As already pointed out, in most cases, one uses an approximate gradient that requires only function evaluations, but no explicit differentiation. Examples are classical finite difference methods such as the Kiefer-Wolfowitz algorithm \cite{KW}, two sample simultaneous perturbation stochastic approximation (SPSA) \cite{Spall}, methods based on local regression \cite{Mukherjee},  single sample SPSA \cite{Spall} and stochastic perturbation based methods \cite{Flaxman}. However, the schemes that require two function approximations  are not free of  problems. They require iterate values at two time instants, $t$ and $t + \delta$ for some $0 < \delta \ll 1$. The finite difference approximation then requires that two separate copies of Markov noise be available: one whose transitions are modulated by $x(n)$ and another whose transitions are modulated by $x(n) +$ a small perturbation. This is not feasible in on-line training and doubles the simulation budget in off-line training. Schemes that use a single function evaluation appear to be the only ones that are free of these problems. Consider for example the noisy estimator for $\nabla F(x_n)$ given by
$$\frac{f(x_n + \delta \xi_n, Z_n)\xi_n}{\delta}..$$
%\tcr{This should be properly stated}. \\
Here $\{\xi_n\}$ are $d$-dimensional i.i.d.\ zero mean vectors with identity covariance matrix. This is adapted from the proposal of \cite{Flaxman} to accommodate Markov noise.  \\

Let $\nu$ denote the common law of  $\xi_n, n \geq 0$. 
Let $\F_n :=$ the $\sigma$-field $\sigma(x_m,Z_m, \xi_m, m \leq n),$ \ $n \geq 0$, where $\{Z_n\}$ is a Markov noise with
$$P(Z_n \in A | \F_n) = p_{x_n + \delta\xi_n}(A|Z_n)$$
for a transition kernel $p_x(\cdot | \cdot)$ as in \eqref{p-ex}. We can view $\{(Z_n,\xi_n)\}$ as an extended Markov noise. Suppose the stationary law of $(Z_n,\xi_n)$ given $x_m\equiv x$ is given by $\pi_n(dz,dy|x) = \nu(dy)\eta_{x + \delta y}(dz)$ where $\eta_x(dz)$ is the unique stationary distribution of $p_x(dz'|z)$. Then
\begin{eqnarray*}
&& E\left[\frac{f(x_n + \delta \xi_n, Z_n)\xi_n}{\delta} \ \Big| \ \F_n\right] \\
&=& \frac{1}{\delta}\int \int (f(x_n + \delta y,z)y\pi_n(dz,dy)  \\
&=& \frac{1}{\delta}\int f(x_n + \delta y,z)y\eta_{x_n + \delta y}(dz)\nu(dy) \\
&=& \frac{1}{\delta}\int \int f(x_n,z)y\eta_{x_n}(dz)\nu(dy) \\
&& + \ \frac{1}{\delta}\int\int ( \nabla f(x_n,z) \eta_{x_n}(dz))\delta yy^T\nu(dy) + o(\delta) \\
&=&  \int \nabla_x f(x_n,z)\eta_{x_n}(dz) + o(\delta) \\
&=& \nabla_x \left(\int f(x,z)\eta_{x}(dz)\right)\Big|_{x=x_n} + o(\delta) \\
&=& \nabla F(x_n) + o(\delta)
\end{eqnarray*}
for $F(\cdot) := \int f(\cdot, z)\eta_\cdot(dz).$

Here in the fourth equality, we have used the fact that $\xi_n$ has zero mean and identity covariance matrix, i.e., $\int y\nu(dy) =$ the zero vector and $\int yy^T\nu(dy) =$ the identity matrix.
Hence we have
$$\nabla F(x_n)) \approx E\left[\frac{f(x_n + \delta \xi_n, Z_n)\xi_n}{\delta} \ \Big| \ \F_n\right]$$
as desired.

An alternative scheme is based on  single function evaluation based SPSA of \cite{Spall} wherein we estimate
$\frac{\partial F}{\partial x(i)}(x_n)$ by
$$\frac{f(x_n + \delta \xi_n, Z_n)}{\delta\xi_n(i)}$$
with $\xi_n(i), 1 \leq i \leq d, n \geq0$, i.i.d.\ taking values $\pm 1$ with equal probability. This too can be analyzed along above lines.

Both cases suffer from the small divisor problem leading to high variance and need additional averaging in order to control it. They are, however, theoretically kosher unlike \eqref{fast2}-\eqref{slow2} and also avoid the complications of schemes that require two or more samples.